\documentclass{article}

\usepackage{PRIMEarxiv}

\usepackage[utf8]{inputenc} 
\usepackage[T1]{fontenc}    
\usepackage{hyperref}       
\usepackage{url}            
\usepackage{booktabs}       
\usepackage{amsfonts}       
\usepackage{nicefrac}       
\usepackage{microtype}      
\usepackage{lipsum}
\usepackage{fancyhdr}       
\usepackage{graphicx}       
\graphicspath{{media/}}     

\usepackage{verbatim}
\usepackage{amsmath}
\usepackage{amsthm}
\usepackage{booktabs}

\pagestyle{fancy}
\thispagestyle{empty}
\rhead{ \textit{ }} 

\fancyhead[LO]{Semantic Properties of Cosine Based Bias Scores for Word Embeddings}

\theoremstyle{definition}
\newtheorem{definition}{Definition}[section]
\newtheorem{theorem}{Theorem}
\newtheorem{lemma}{Lemma}

\newcommand{\extremaVgl}{magnitude-comparable} 
\newcommand{\minVgl}{unbiased-trustworthy}
\newcommand{\wordbias}{individual bias}
\newcommand{\setbias}{aggregated bias}
%
%

\newcommand{\weatw}{$\textit{WEAT}_{\textit{sample}}$} 
\newcommand{\weat}{$\textit{WEAT}$} 
\newcommand{\db}{\textit{DirectBias}}

\title{Semantic Properties of Cosine Based Bias Scores for Word Embeddings
}

\author{
  Sarah Schr\"oder, Alexander Schulz, Fabian Hinder, Barbara Hammer \\
  Machine Learning Group \\
  Bielefeld University \\
  Bielefeld, Germany\\
  \texttt{\{saschroeder, aschulz, fhinder, bhammer\}@techfak.uni-bielefeld.de} \\
}

\begin{document}
\maketitle

\begin{abstract}
Plenty of works have brought social biases in language models to attention and proposed methods to detect such biases. As a result, the literature contains a great deal of different bias tests and scores, each introduced with the premise to uncover yet more biases that other scores fail to detect. What severely lacks in the literature, however, are comparative studies that analyse such bias scores and help researchers to understand the benefits or limitations of the existing methods. 
In this work, we aim to close this gap for cosine based bias scores. By building on a geometric definition of bias, we propose requirements for bias scores to be considered meaningful for quantifying biases. Furthermore, we formally analyze cosine based scores from the literature with regard to these requirements. We underline these findings with experiments to show that the bias scores' limitations have an impact in the application case.

This is a preprint of the conference paper \cite{icpram24schroeder}. An earlier draft can be found on arXiv \cite{schröder2021evaluatingmetricsbiasword}.
\end{abstract}

\section{Introduction}
\label{sec:intro}
In the domain of Natural Language Processing (NLP), many works have investigated social biases in terms of associations in the embeddings space. Early works \cite{bolukbasi,weat} introduced methods to measure and mitigate social biases based on cosine similarity in word embeddigs. With NLP research progressing to large language models and contextualized embeddings, doubts have been raised whether these methods are still suitable for fairness evaluation \cite{seat} and other works criticize that for instance the Word Embedding Association Test (WEAT) \cite{weat} fails to detect some kinds of biases \cite{lipstick,ripa}. Overall there exists a great deal of bias measures in the literature, which not necessarily detect the same biases \cite{kurita,lipstick,ripa}. In general, researchers are questioning the usability of model intrinsic bias measures, such as cosine based methods \cite{steed2022upstream,goldfarb2020intrinsic,kaneko2022debiasing}. There exist few papers that compare the performance of different bias scores \cite{delobelle2021measuring,schroder2023so} and works that evaluate experimental setups for bias measurement \cite{seshadri2022quantifying}. However, to our knowledge, only two works investigate the properties of intrinsic bias scores on a theoretical level \cite{ripa,du-etal-2021-assessing}.
To further close this gap, we evaluate the semantic properties of cosine based bias scores, focusing on bias quantification as opposed to bias detection. We make the following contributions:
(i) We formalize the properties of trustworthiness and comparability as requirements for cosine based bias scores.
(ii) We analyze WEAT and the Direct Bias, two prominent examples from the literature.
(iii) We conduct experiments to highlight the behavior of WEAT and the Direct Bias in practice.

Both our theoretical analysis and experiments show limitations of these bias scores in terms of bias quantification. It is crucial that researchers take these limitations into account when considering WEAT or the Direct Bias for their works. Furthermore, we lay the ground work to analyze other cosine based bias scores and understand how they can be useful for the fairness literature.
The paper is structured as follows: In Section \ref{sec:related_work} we summarize WEAT, the Direct Bias and general terminology for cosine based bias measures from the literature. We introduce formal requirements for such bias scores in Section \ref{sec:req} and analyze WEAT and the Direct Bias in terms of these requirements in Section \ref{sec:score_analysis}. In Section \ref{sec:experiments} we support our theoretical findings by experiments, before drawing our conclusions in Section \ref{sec:conclusion}.

\section{Related Work for Bias in World Embeddings}
\label{sec:related_work}
\subsection{WEAT}

The Word Embedding Association Test, short WEAT, \cite{weat}, is a statistical test for stereotypes in word embeddings. 
The test compares two sets of target words $X$ and $Y$ with two sets of bias attributes $A$ and $B$ of equal size $n$ under the hypothesis that words in $X$ are rather associated with words in $A$ and words in $Y$ rather associated with words in $B$. The association of a single word $\mathbf{w}$ with the bias attribute sets $A$ and $B$ including $n$ attributes each, is given by
\begin{eqnarray}
\label{eq:weat_attr_sim}
s(\mathbf{w},A,B) = \frac{1}{n} \sum_{\mathbf{a} \in A}\cos(\mathbf{w},\mathbf{a}) - \frac{1}{n} \sum_{\mathbf{b} \in B}\cos(\mathbf{w},\mathbf{b}).
\end{eqnarray}
To measure bias in the sets $X$ and $Y$, the effect size is used, which is a normalized measure for the association difference between the target sets
\begin{eqnarray}
\label{eq:weat_eff_size}
d(X,Y,A,B) = \frac{ \frac{1}{n} \sum_{\mathbf{x} \in X} s(\mathbf{x},A,B) - \frac{1}{n} \sum_{\mathbf{y} \in Y} s(\mathbf{y},A,B)}{stddev_{\mathbf{w} \in X \cup Y} s(\mathbf{w},A,B)}.
\end{eqnarray}
A positive effect size confirms the hypothesis that words in $X$ are rather stereotypical for the attributes in $A$ and words in $Y$ stereotypical for words in $B$, while a negative effect size indicates that the stereotypes would be counter-wise. To determine if the effect is indeed statistically significant, the permutation test
\begin{eqnarray}
\label{eq:weat_perm}
p &=& P_r [ s(X_i,Y_i,A,B) > s(X,Y,A,B)].
\end{eqnarray}
with subsets $(X_i,Y_i)$ of $X \cup Y$ and the test statistic
\begin{eqnarray}
\label{eq:weat_test_stat}
s(X,Y,A,B) &=& \sum_{\mathbf{x} \in X} s(\mathbf{x},A,B) - \sum_{\mathbf{y} \in Y} s(\mathbf{y},A,B)
\end{eqnarray}
is done. As a statistical test WEAT is suited to confirm a hypothesis (such that a certain type of stereotype exists in a model), but it cannot prove the opposite.

\subsection{Direct Bias}
\label{sec:Bolukbasi-directbias}

The Direct Bias \cite{bolukbasi} is defined as the correlation of neutral words $\mathbf{w} \in W$ with a bias direction (for example gender direction $\mathbf{g}$):
\begin{eqnarray}
\label{eq:direct_bias}
DirectBias(W) := \frac{1}{|W|} \sum_{\mathbf{w} \in W} |\cos(\mathbf{w},\mathbf{g})|^c 
\end{eqnarray}

with $c$ determining the strictness of bias measurement. The gender direction is either obtained by a gender word-pair e.g. $\mathbf{g} = \mathbf{he} - \mathbf{she}$ or - to get a more robust estimate - it is obtained by computing the first principal component over a set of individual gender directions from different word-pairs.\\
In terms of their debiasing algorithm the authors describe how to obtain a bias subspace given defining sets $D_1$, ..., $D_n$. A defining set $D_i$ includes words $\mathbf{w}$ that only differ by the bias relevant topic e.g. for gender bias $\{\mathbf{man},\mathbf{woman}\}$ could be used as a defining set. Given these sets, the authors construct individual bias directions $\mathbf{w} - \mathbf{\mu_i} \; \forall \mathbf{w} \in D_i, i \in \{1,...,n\}$ and $\mathbf{\mu_i} = \sum_{\mathbf{w} \in D_i} \frac{\mathbf{w}}{|D_i|}$. To obtain a k-dimensional bias subspace $B$ they compute the $k$ first principal components over these samples.

\subsection{Terminology}
In the literature geometrical bias is measured by comparing neutral targets against sensitive attributes. By targets and attributes we refer to vector representations of words, sentences or text in a $d$-dimensional embedding space. However, the methodology can be applied to any kind of vector representations. While the exact notation varies between publications, we summarize and use it in the following Sections as follows:

Given a protected attribute like gender or race, we select $n \geq 2$ protected groups that might be subject to biases. Each protected group is defined by a set of attributes $\mathbf{a_{ik}} \in A_i$ with $i \in \{1, ..., n\}$ the group's index. We summarize these attribute sets as $A = \{A_1, ..., A_n\}$. The intuition is that the attributes define the relation of protected groups by contrasting specifically over the membership to the different groups. Therefore, it is important that any attribute $\mathbf{a_{ik}} \in A_i$ has a counterpart $\mathbf{a_{jk}} \in A_j \; \forall \; A_j \in A, j \neq i$ that only differs from $\mathbf{a_{ik}}$ by the group membership. For instance, if we used $A_1 = \{she, female, woman\}$ as a selection of female terms, $A_2 = \{he, male, man\}$ would be the proper choice of male terms.

Analogously to WEAT's definition of word biases, we define the association of a target $\mathbf{t}$ with one protected group, represented by $A_i$, as
\begin{align}
\label{eq:group_sim}
s(\mathbf{t},A_i) = \frac{1}{|A_i|} \sum_{\mathbf{a_{ik}} \in A_i} cos(\mathbf{t}. \mathbf{a_{ik}})
\end{align}
A similar notion is found with the Direct Bias \cite{bolukbasi}. To detect bias, one would consider the difference of associations towards the different groups, i.e. is $\mathbf{t}$ more similar to one protected group than the others. This concept is also found in most cosine based bias scores.\\
Whether such association differences are harmful depends on whether $\mathbf{t}$ is theoretically neutral to the protected groups. For example, terms like "aunt" or "uncle" are associated with one or the other gender per definition, while a term like "nurse" should not be associated with gender.

\section{Formal Requirements for Bias Scores}
\label{sec:req}
\subsection{Formal Bias Definition and Notations}
\label{sec:formal_bias_def}
As baseline for our bias score requirements and the following analysis of bias scores from the literature, we suggest two intuitive definitions of individual bias for target samples (e.g. one word) $\mathbf{t}$ and aggregated biases for sets of targets $T$.
For samples $\mathbf{t}$ we apply the intuition of WEAT, extended to $n$ protected groups instead of only two.

\begin{definition}[\wordbias]
	\label{def:word_bias}
	Given $n$ protected groups represented by attribute sets $A_1$, ..., $A_n$ and a target $\mathbf{t}$ that is theoretically neutral to these groups, we consider $\mathbf{t}$ biased if 
	\begin{align}
	\label{eq:word_bias}
    \exists A_i, A_j \in A: s(\mathbf{t},A_i) > s(\mathbf{t},A_j)
	\end{align}
\end{definition}

\begin{definition}[\setbias] 
	\label{def:set_bias}
	Given $n$ protected groups represented by attribute sets $A_1$, ..., $A_n$ and a set of targets $T$ containing only samples that are theoretically neutral to these groups, we consider $T$ biased if at least one sample $\mathbf{t} \in T$ is biased:
	\begin{align}
	\label{eq:set_bias}
    \exists A_i, A_j \in A, \mathbf{t} \in T: s(\mathbf{t},A_i) > s(\mathbf{t},A_j)
	\end{align}
\end{definition}

The idea behind Definition \ref{def:set_bias} is that even when looking at aggregated biases, each individual bias is important, i.e. as long as there is one biased target in the set, we cannot call the set unbiased, even if target biases cancel out on average or the majority of targets is unbiased.

In the following we will use a notation for bias score functions in general: $b(\mathbf{t},A)$ measuring the bias of one target and $b(T,A)$ for aggregated biases. Note that there are two different strategies in the literature: Bias scores measuring bias over all neutral words jointly (Direct Bias), which matches our notation $b(T,A)$, and bias scores measuring the bias over two groups of neutral words $X,Y \subset T$ (WEAT). In the later case, we consider the selection of subsets $X,Y \subset T$ as part of the bias score and thus treat it as a function $b(T,A)$.

Since the bias scores from the literature have different extreme values and different values indicating no bias, we use the following notations: $b_{min}$ and $b_{max}$ are the extreme values of $b(\cdot)$ and $b_0$ is the value of $b(\cdot)$ that means $\mathbf{t}$ or $T$ is unbiased. Note that $b_{min}$ and $b_0$ are not necessarily equal.

\subsection{Requirements for Bias Metrics}
\label{sec:requirements}

Based on the definitions of bias explained in Section \ref{sec:formal_bias_def}, we formalize the properties of \emph{trustworthiness} and \emph{magnitude-comparability}. 
The goal of both properties is to ensure that biases can be quantified in a way such that bias scores can be safely compared between different embedding models and debiasing methods can be evaluated without risking to overlook bias.

\subsubsection{Comparability}
The goal of \emph{magnitude-comparability}, is to ensure that bias scores are comparable between embeddings of different models. This is necessary to make statements about embedding models being more or less biased than others, which includes comparing debiased embeddings with their original counterparts.
We find a necessary condition for such comparability is the possibility to reach the extreme values $b_{min}$ and $b_{max}$ of $b(\cdot)$ in different embedding spaces depending only on the neutral targets and their relation to attribute vectors, as opposed to the attribute vectors themselves, which might be embedded differently given different models.

\begin{definition}[\extremaVgl]
	\label{def:max_amplitutde}
	We call the bias score function $b(T,A)$ \extremaVgl\ if, for a fixed number of target samples in set $T$ (including the case $T = \{\mathbf{t}\}$), the maximum bias score $b_{max}$ and the minimum bias score $b_{min}$ are independent of the attribute sets in A:
	\begin{align}
	\label{eq:max_min_value}
	\max_{T, |T|=const} b(T,A) = b_{max} \; \forall \;  A, \quad  \\
	\min_{T, |T|=const} b(T,A) = b_{min} \; \forall \;  A.
	\end{align}
\end{definition}

\subsubsection{Trustworthiness}
The second property of \emph{trustworthiness} defines whether we can trust a bias score to report any bias in accordance to Definitions \ref{def:word_bias} and \ref{def:set_bias}, i.e. the bias score can only reach $b_0$, which indicates fairness, if the observed target is equidistant to all protected groups and for target sets if all samples in the observed set of targets are unbiased. This is important, because even if a set of targets is mostly unbiased or target biases cancel out on average, individual biases can still be harmful and should thus be detected.
The requirement for the consistency of the minimal bias score $b_0$ can be formulated in a straight forward way using the similarities to the attribute sets $A_i$.

\begin{definition}[\minVgl]
	\label{def:min_max_bias}
	Let $b_0$ be the bias score of a bias score function, that is equivalent to no bias being measured.
	We call the bias score function $b(\mathbf{t},A)$ 
	\minVgl\ if 
	\begin{align}
	\label{eq:cond_neutral}
	b(\mathbf{t},A) = b_0 \iff s(\mathbf{t},A_i) = s(\mathbf{t},A_j) \; \forall \;  A_i, A_j \in A.
	\end{align}
	Analogously for aggregated scores with a set $T = \{ \mathbf t_1, ..., \mathbf t_m\}$,
	we say $b(T,A)$ is \minVgl\ if
	\begin{align}
	\label{eq:cond_neutral2}
	b(T,A) &= b_0 \\
    \iff s(\mathbf{t}_k,A_i) &= s(\mathbf{t}_k,A_j) \; \forall \;  A_i, A_j \in A, k\in\{1,...,m\}.
	\end{align}
\end{definition}

\section{Analysis of Bias Scores}
\label{sec:score_analysis}
As a major contribution of this work, we formally analyze WEAT and the Direct Bias with regard to the properties defined in Section \ref{sec:requirements}. Table \ref{tab:overview} gives an overview over the properties. The detailed analyses follow in Section \ref{sec:weat_analysis} for WEAT and Section \ref{sec:analysis_db} for the Direct Bias.

\begin{table}[b]
\centering
\caption{Overview over the properties of bias scores.}
\label{tab:overview}
\begin{tabular}{ ccc }
\toprule
bias score & comparable & trustworthy \\
\midrule
\weatw &  x & \checkmark \\
\weat &  \checkmark & x  \\
\db &  \checkmark & x \\
\bottomrule
\end{tabular}
\end{table}


\subsection{Analysis of WEAT}
\label{sec:weat_analysis}
In the following, we detail properties of WEAT in light of the definitions stated above. First, we focus on the individual biases as reported by $s(\mathbf{t},A,B)$.

\begin{theorem}
\label{theor:weat_s_extrema}
	The bias score function $s(\mathbf{t},A,B)$ of WEAT is not \extremaVgl.
\end{theorem}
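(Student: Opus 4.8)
The plan is to show that the extreme values of $s(\mathbf{t},A,B)$, taken over all admissible targets $\mathbf{t}$, genuinely depend on the attribute sets $A$ and $B$, which directly contradicts Definition~\ref{def:max_amplitutde} in the case $|T|=1$. Since every term $\cos(\mathbf{t},\mathbf{a})$ depends only on the direction of $\mathbf{t}$, I would first restrict attention to unit targets $\|\mathbf{t}\|=1$ without loss of generality. Writing $\hat{\mathbf{a}} := \mathbf{a}/\|\mathbf{a}\|$ for the normalized attribute vectors, the score collapses to a single linear functional:
\[
s(\mathbf{t},A,B) = \mathbf{t}\cdot\mathbf{v}_{A,B}, \qquad \mathbf{v}_{A,B} := \frac{1}{n}\sum_{\mathbf{a}\in A}\hat{\mathbf{a}} - \frac{1}{n}\sum_{\mathbf{b}\in B}\hat{\mathbf{b}}.
\]

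The second step is to read off the extrema from this inner-product form. By Cauchy--Schwarz, over unit $\mathbf{t}$ one has $\max_{\mathbf{t}} s = \|\mathbf{v}_{A,B}\|$, attained at $\mathbf{t}=\mathbf{v}_{A,B}/\|\mathbf{v}_{A,B}\|$ whenever $\mathbf{v}_{A,B}\neq 0$, and symmetrically $\min_{\mathbf{t}} s = -\|\mathbf{v}_{A,B}\|$. Thus both $s_{max}$ and $s_{min}$ are completely governed by the single quantity $\|\mathbf{v}_{A,B}\|$, which is a function of how the attribute vectors are arranged. To disprove magnitude-comparability it then suffices to exhibit two attribute configurations yielding different values of $\|\mathbf{v}_{A,B}\|$. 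The cleanest witness is the case $n=1$ with $A=\{\mathbf{a}\}$, $B=\{\mathbf{b}\}$, where a short expansion gives $\|\mathbf{v}_{A,B}\|^2 = 2 - 2\cos(\mathbf{a},\mathbf{b})$, so that $s_{max}=\sqrt{2-2\cos(\mathbf{a},\mathbf{b})}$ ranges from $0$ (for $\cos=1$) through $\sqrt{2}$ (orthogonal) up to $2$ (antipodal) as the angle between the two attributes varies. Hence the maximum is not a fixed constant across attribute sets.

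I expect the only genuine obstacle to be bookkeeping about attainability: one must check that the optimizing direction $\mathbf{v}_{A,B}/\|\mathbf{v}_{A,B}\|$ is itself an admissible target in the ambient space (it always is, as it is just a unit vector in $\mathbb{R}^d$), and one must handle the degenerate case $\mathbf{v}_{A,B}=\mathbf 0$, in which $s\equiv 0$ so $s_{max}=0$. This degenerate case is in fact helpful rather than harmful, since it still produces a value of $s_{max}$ different from the non-degenerate configurations, reinforcing the dependence on $A,B$. Choosing the ambient dimension $d\geq 2$ guarantees that attributes realizing distinct angles exist, which closes the argument.
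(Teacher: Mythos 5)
Your proof is correct and follows essentially the same route as the paper: both rewrite $s(\mathbf{t},A,B)$ as an inner product of the normalized target with the difference of averaged normalized attribute vectors (your $\mathbf{v}_{A,B}$ is exactly the paper's $\hat{\mathbf{a}}-\hat{\mathbf{b}}$) and conclude the extrema are $\pm\|\mathbf{v}_{A,B}\|$, which depends on the attribute sets. Your version is in fact slightly more complete, since you explicitly exhibit configurations realizing different values of $\|\mathbf{v}_{A,B}\|$ (the $n=1$ case with $\|\mathbf{v}_{A,B}\|^2 = 2-2\cos(\mathbf{a},\mathbf{b})$) and handle the degenerate case $\mathbf{v}_{A,B}=\mathbf{0}$, steps the paper leaves implicit.
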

\begin{proof}
\label{proof:weat_s_extrema}
With $\mathbf{\hat{a}} = \frac{1}{|A|} \sum_{\mathbf{a} \in A} \frac{\mathbf{a}}{||\mathbf{a}||}$ and $\mathbf{\hat{b}}$ analogously defined, we can rewrite
\begin{align}
    s(\mathbf{t},A,B) &=& \frac{\mathbf{t} \cdot \mathbf{\hat{a}}}{||\mathbf{t}||} - \frac{\mathbf{t} \cdot \mathbf{\hat{b}}}{||\mathbf{t}||} \\
    &=& \frac{\mathbf{t}}{||\mathbf{t}||} \cdot \Big(\mathbf{\hat{a}} - \mathbf{\hat{b}} \Big) \\
    &=& cos(\mathbf{t},\mathbf{\hat{a}} - \mathbf{\hat{b}}) ||\mathbf{\hat{a}} - \mathbf{\hat{b}}||.
\end{align}
Hence we can show that the extreme values depend on the attribute sets $A$ and $B$:
\begin{align}
    max_{\mathbf{t}} s(\mathbf{t},A,B) = ||\mathbf{\hat{a}} - \mathbf{\hat{b}}||, \\
    min_{\mathbf{t}} s(\mathbf{t},A,B) = -||\mathbf{\hat{a}} - \mathbf{\hat{b}}||
\end{align}
The statement follows.
\end{proof}

\begin{theorem}
\label{theor:weat_s_min}
	The bias score function $s(\mathbf{t},A,B)$ of WEAT is \minVgl.
\end{theorem}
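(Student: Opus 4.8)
The plan is to show directly that the only value of $s(\mathbf{t},A,B)$ indicating no bias, namely $b_0 = 0$, is attained exactly when $\mathbf{t}$ is equally associated (in the cosine sense) with the two groups. The key observation is that WEAT operates on exactly two protected groups, represented by the attribute sets $A$ and $B$. Consequently the universal condition ``$s(\mathbf{t},A_i)=s(\mathbf{t},A_j)$ for all $A_i,A_j$'' appearing in Definition \ref{def:min_max_bias} collapses to the single equality $s(\mathbf{t},A)=s(\mathbf{t},B)$, where $s(\mathbf{t},A)$ denotes the group association from the terminology section (Equation \ref{eq:group_sim}).

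First I would fix $b_0 = 0$, which is the natural ``no bias'' value for WEAT's word-association score, since it is defined as a difference of associations. Next I would rewrite the score using the group-association notation: comparing the definition of $s(\mathbf{t},A,B)$ in Equation \ref{eq:weat_attr_sim} with the group association $s(\mathbf{t},A_i)$ and using $|A|=|B|=n$, we obtain $s(\mathbf{t},A,B) = s(\mathbf{t},A) - s(\mathbf{t},B)$. Identifying these two uses of the symbol $s$ is the only step that requires care, since the same letter denotes both the WEAT word bias and the group association; I would therefore state this reconciliation explicitly rather than leave it implicit.

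With this decomposition the equivalence is immediate in both directions: $s(\mathbf{t},A,B) = 0$ holds if and only if $s(\mathbf{t},A) - s(\mathbf{t},B) = 0$, i.e. if and only if $s(\mathbf{t},A) = s(\mathbf{t},B)$, which is precisely the \minVgl\ condition for the two-group case. Hence $s(\mathbf{t},A,B)$ is \minVgl. I expect no genuine obstacle here: the result is essentially a restatement of the definition once the two meanings of $s$ are matched up. The only point worth flagging is that \minVgl\ quantifies over all pairs of groups whereas WEAT is inherently a two-group score, so the quantifier reduces to a single pair; this is exactly why the theorem holds without any of the nontrivial geometric argument that was needed for the comparability failure in Theorem \ref{theor:weat_s_extrema}.
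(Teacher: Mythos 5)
Your proposal is correct and takes essentially the same route as the paper: both proofs reduce $s(\mathbf{t},A,B)$ to the difference $s(\mathbf{t},A)-s(\mathbf{t},B)$ via equation \eqref{eq:weat_attr_sim} and \eqref{eq:group_sim}, so that $s(\mathbf{t},A,B)=0 \iff s(\mathbf{t},A)=s(\mathbf{t},B)$, which is exactly the \minVgl\ condition for two groups. Your explicit remarks about reconciling the two uses of the symbol $s$ and about the quantifier in Definition \ref{def:min_max_bias} collapsing to a single pair are just careful spellings-out of what the paper leaves implicit.
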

\begin{proof}
\label{proof:weat_s_min}
This follows directly from the definition of $s(\mathbf{t},A,B)$ (equation \eqref{eq:weat_attr_sim}):
\begin{align}
& s(\mathbf{t},A,B) = s(\mathbf{t},A) - s(\mathbf{t},B) = 0 \\
\iff & s(\mathbf{t},A) = s(\mathbf{t},B)
\end{align}
\end{proof}

Next, we focus on the properties of the effect size $d(X,Y,A,B)$, identified by \weat\ in Table \ref{tab:overview}. Note that it is not specified for cases, where  $s(\mathbf{t},A,B) = s(\mathbf{t'},A, B) \;  \forall \mathbf{t}, \mathbf{t'} \in X \cup Y$ due to its denominator. This is highly problematic considering Definition \ref{def:min_max_bias}, which states that a bias score should be $0$ in that specific case. For Theorem \ref{theor:weat_extrema} we need Lemma \ref{theor:schrankeAllg} from the Appendix.

\begin{theorem}
\label{theor:weat_d_min}
	The effect size $d(X,Y,A,B)$ of WEAT is not \minVgl.
\end{theorem}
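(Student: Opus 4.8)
The plan is to disprove the biconditional required by Definition \ref{def:min_max_bias} by exhibiting configurations in which it breaks. Recall that for the aggregated case the effect size plays the role of $b(T,A)$ with $b_0 = 0$ and $T = X \cup Y$, so being \minVgl\ would force $d(X,Y,A,B) = 0$ exactly when every target $\mathbf{t}_k$ satisfies $s(\mathbf{t}_k,A) = s(\mathbf{t}_k,B)$, equivalently $s(\mathbf{t}_k,A,B) = 0$ for all $k$. I would show that both implications fail, either of which suffices.

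First I would attack the forward direction ($d = b_0 \Rightarrow$ all targets unbiased). The numerator of $d$ only compares the \emph{averages} of $s(\cdot,A,B)$ over $X$ and over $Y$, so it vanishes as soon as these two averages agree, irrespective of the individual associations. Concretely I would take $X = \{\mathbf{x}_1, \mathbf{x}_2\}$ and $Y = \{\mathbf{y}_1, \mathbf{y}_2\}$ and choose targets with $s(\mathbf{x}_1,A,B) = -s(\mathbf{x}_2,A,B) = c$ and $s(\mathbf{y}_1,A,B) = -s(\mathbf{y}_2,A,B) = c$ for some $c \neq 0$. Then both group averages are $0$, the numerator vanishes, and $d = 0 = b_0$, yet every target is biased since $s(\mathbf{t}_k,A) \neq s(\mathbf{t}_k,B)$. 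Using the rewriting $s(\mathbf{t},A,B) = \cos(\mathbf{t}, \mathbf{\hat{a}} - \mathbf{\hat{b}})\,||\mathbf{\hat{a}} - \mathbf{\hat{b}}||$ from the proof of Theorem \ref{theor:weat_s_extrema}, these prescribed nonzero values are realizable whenever $\mathbf{\hat{a}} \neq \mathbf{\hat{b}}$, and the resulting value set $\{c,-c,c,-c\}$ has standard deviation $|c| > 0$, so the denominator is nonzero and $d$ is genuinely well defined.

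Second I would observe that the reverse direction ($d = b_0 \Leftarrow$ all targets unbiased) fails because of the denominator. If every target is unbiased then $s(\mathbf{t}_k,A,B) = 0$ for all $k$, so the numerator and the standard deviation in the denominator are both $0$ and $d = 0/0$ is undefined; in particular $d$ cannot attain $b_0$. This is precisely the degenerate case flagged before the theorem statement, and it already contradicts the equivalence demanded by \minVgl.

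The main obstacle will be realizability rather than algebra: I must argue that target vectors producing the desired canceling associations actually exist in some embedding space and that the induced standard deviation is strictly positive, so the $d = 0$ instance is a bona fide well-defined value and not itself a $0/0$ artifact. The identity borrowed from Theorem \ref{theor:weat_s_extrema} settles this, since for fixed attribute sets with $\mathbf{\hat{a}} \neq \mathbf{\hat{b}}$ the quantity $s(\mathbf{t},A,B)$ sweeps continuously over $[-||\mathbf{\hat{a}}-\mathbf{\hat{b}}||, ||\mathbf{\hat{a}}-\mathbf{\hat{b}}||]$ as the direction of $\mathbf{t}$ varies, leaving ample freedom to hit the prescribed nonzero targets.
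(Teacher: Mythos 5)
Your proposal is correct and takes essentially the same route as the paper: exhibit target sets on which the effect size's numerator cancels so that $d = 0$ while the individual associations $s(\mathbf{t},A,B)$ remain nonzero, together with a realizability check that such configurations exist whenever $\mathbf{\hat{a}} \neq \mathbf{\hat{b}}$. The differences are minor: the paper matches values across groups ($s(\mathbf{t}_1,A,B)=s(\mathbf{t}_3,A,B)$, $s(\mathbf{t}_2,A,B)=s(\mathbf{t}_4,A,B)$, realized by an explicit orthogonality construction $(\mathbf{a}-\mathbf{b})\cdot(\mathbf{t}_1-\mathbf{t}_3)=0$ with singleton attribute sets) whereas you cancel within groups ($\pm c$ in each of $X$ and $Y$), and your supplementary point that $d$ is undefined ($0/0$) when all targets are unbiased is the same degeneracy the paper flags in the text immediately before the theorem rather than inside its proof.
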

\begin{proof}
\label{proof:weat_d_min}
For the WEAT score $b_0 = 0$. 
With four targets $\mathbf{t}_1, \mathbf{t}_2, \mathbf{t}_3, \mathbf{t}_4$ and $s(\mathbf{t}_1,A,B) = s(\mathbf{t}_3,A,B)$ and $s(\mathbf{t}_2,A,B) = s(\mathbf{t}_4,A,B)$ the effect size
\begin{align}
\label{eq:eff_size}
& d(\{\mathbf{t}_1,\mathbf{t}_2\},\{\mathbf{t}_3,\mathbf{t}_4\},A,B) = \\ 
& \frac{(s(\mathbf{t}_1,A,B) + s(\mathbf{t}_2,A,B)) - (s(\mathbf{t}_3,A,B) + s(\mathbf{t}_4,A,B))}{2 \cdot \mathrm{stddev}_{\mathbf{t} \in \{\mathbf{t}_1,\mathbf{t}_2,\mathbf{t}_3, \mathbf{t}_4\}} s(\mathbf{t},A,B)}
\end{align}
is $0$, if $s(\mathbf{t}_1,A,B) \neq s(\mathbf{t}_2,A,B)$ (otherwise $d$ is not defined). 
Now, for the simple case $A=\{\mathbf{a}\}, B=\{\mathbf{b}\}$ and assuming all vectors having length $1$, we see
\begin{align}
&s(\mathbf{t}_1,A,B) = s(\mathbf{t}_3,A,B) \nonumber \\
\iff & \mathbf{a}\cdot\mathbf{t}_1 - \mathbf{b}\cdot\mathbf{t}_1 = \mathbf{a}\cdot\mathbf{t}_3 - \mathbf{b}\cdot\mathbf{t}_3 \nonumber \\
\iff & \mathbf{a}\cdot(\mathbf{t}_1 - \mathbf{t}_3) - \mathbf{b}\cdot(\mathbf{t}_1 - \mathbf{t}_3) = 0 \nonumber \\
\iff & (\mathbf{a} - \mathbf{b})\cdot(\mathbf{t}_1 - \mathbf{t}_3) = 0.
\end{align}
This implies that, if the two vectors $\mathbf{a} - \mathbf{b}$ and $\mathbf{t}_1 - \mathbf{t}_3$ are orthogonal (and e.g.\ $s(\mathbf{t}_2,A,B)=0$), the WEAT score returns 0. In this case, there exist $\mathbf{a}, \mathbf{b}, \mathbf{t}_1, \mathbf{t}_3$ with $s(\mathbf{t}_1,A,B) = s(\mathbf{t}_3,A,B) \neq 0$ and accordingly $s(\mathbf{t}_1,A) \neq s(\mathbf{t}_1,B)$.
\end{proof}

\begin{theorem}
\label{theor:weat_extrema}
The effect size $d(X,Y,A,B)$ of WEAT with $X=\{\mathbf{x}_1, \ldots,\mathbf{x}_m\}, Y=\{\mathbf{y}_1, \ldots,\mathbf{y}_m\}$ is \extremaVgl.
\end{theorem}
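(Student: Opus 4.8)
The plan is to reduce the effect size to a function of the $2m$ scalar associations of the targets and then exploit a scale-invariance of that function. From Theorem~\ref{theor:weat_s_extrema} I can write, setting $\mathbf{v} := \mathbf{\hat{a}} - \mathbf{\hat{b}}$ and $L := \|\mathbf{v}\|$,
\[ s(\mathbf{t},A,B) = L\,\cos(\mathbf{t},\mathbf{v}), \]
so the association of a single target sweeps out the whole interval $[-L,L]$ as $\mathbf{t}$ varies, and in embedding dimension at least two every intermediate value is attained by choosing the angle between $\mathbf{t}$ and $\mathbf{v}$ suitably. Since $\mathbf{x}_1,\ldots,\mathbf{x}_m,\mathbf{y}_1,\ldots,\mathbf{y}_m$ are independent vectors, the tuple $(s(\mathbf{x}_1,A,B),\ldots,s(\mathbf{y}_m,A,B))$ ranges exactly over the box $[-L,L]^{2m}$.

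Next I note that $d(X,Y,A,B)$ depends on the targets only through these $2m$ numbers: its numerator is the scaled difference of the two group means and its denominator is their standard deviation over $X\cup Y$. The decisive observation is that this ratio is invariant under replacing every association value $s_i$ by $\lambda s_i$ for any $\lambda>0$ (numerator and denominator are both positively homogeneous of degree one) and under adding a common constant to all of them. Consequently, for fixed cardinality $|T|=2m$ the rescaling bijection $\mathbf{s}\mapsto L\,\mathbf{s}$ maps $[-1,1]^{2m}$ onto $[-L,L]^{2m}$ while leaving $d$ unchanged, so that
\[ \max_{T,\,|T|=2m} d(X,Y,A,B) = \max_{\mathbf{s}\in[-1,1]^{2m}} d, \]
and likewise for the minimum. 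Both extrema are therefore independent of $L$, hence of the attribute sets $A$ and $B$, which is exactly the condition of Definition~\ref{def:max_amplitutde}.

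To exhibit the common extreme value and to confirm it is genuinely attained away from the degenerate configuration $\mathrm{stddev}=0$ on which $d$ is undefined, I would invoke Lemma~\ref{theor:schrankeAllg}, which bounds the effect-size ratio uniformly: by the law of total variance the within-group spread only inflates the denominator, so the optimum forces all of $X$ to share one association value and all of $Y$ another, yielding a bound that depends only on $m$ and the standard-deviation normalization. The main subtlety I expect is precisely this degenerate case: the box $[-L,L]^{2m}$ must be read with its diagonal (all values equal) removed, and I must check that the rescaling bijection preserves this excluded set so that the equality of maxima is unaffected; once that is in place, attainment at the two-point configuration shows the extremum is a genuine maximum rather than a mere supremum.
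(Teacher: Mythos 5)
Your proof is correct, but it takes a genuinely different route from the paper's. The paper's argument is computational: setting $c_i = s(\mathbf{x}_i,A,B)$ and $c_{i+m}=s(\mathbf{y}_i,A,B)$, it algebraically rewrites the effect size as $d = \frac{2}{m}\sum_{i=1}^{m}\frac{c_i-\hat{\mu}}{\hat{\sigma}}$, applies Lemma~\ref{theor:schrankeAllg} with $n=2m$ to conclude $d\in[-2,2]$, and exhibits attainment at $\mathbf{x}_1=\ldots=\mathbf{x}_m=-\mathbf{y}_1=\ldots=-\mathbf{y}_m$, so that independence of $A$ and $B$ is read off from the explicit extreme values $\pm 2$. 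You instead prove independence before, and essentially without, computing the extremes: $d$ depends on $T$ only through the association tuple, that tuple sweeps the full box $[-L,L]^{2m}$ (your dimension-at-least-two caveat, which the paper never needs since it writes down explicit target configurations), and $d$ is invariant under positive rescaling of the tuple, so its attainable range --- with the diagonal removed on both sides, as you correctly insist --- is the same for every $L>0$. This is more conceptual and buys generality: it shows in one stroke that \emph{any} scale-invariant function of the association values is \extremaVgl, with no need to know $b_{max}$. What the paper's computation buys is the explicit value $b_{max}=2$ and the exact attainment condition, which the discussion following Theorem~\ref{theor:weat_extrema} actually exploits (the observation that shrinking the variance of $s(\mathbf{x},A,B)$ inflates the effect size). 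Two points you should tighten. First, your rescaling map is a bijection only when $L=\|\mathbf{\hat{a}}-\mathbf{\hat{b}}\|>0$; state this nondegeneracy hypothesis explicitly (it is the same caveat the paper appends at the end of its proof), since for $L=0$ the score is undefined for every $T$ and neither extremum exists. Second, Lemma~\ref{theor:schrankeAllg} bounds $\sum_{i=1}^{m}(c_i-\hat{\mu})/\hat{\sigma}$, not the effect size itself, so invoking it as bounding the ratio \emph{uniformly} silently requires the paper's algebraic identity as a bridge; your law-of-total-variance argument (within-group spread only inflates the denominator while the numerator depends on group means alone, and the collapsed two-value configuration yields $|d|=2$ exactly) closes that gap on its own and is arguably more elementary than the Lemma's KKT-based proof.
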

\begin{proof}
\label{proof:weat_extrema}
With $c_i = s(\mathbf{x_i},A,B)$, $c_{i+m} = s(\mathbf{y_i},A,B)$, $n=2m$, $\hat{\mu}=1/n \sum^{n}_{i=1} c_i$ and $\hat{\sigma}=\sqrt{1/n\sum^{n}_{i=1} (c_i - \mu)^2}$, we have
\begin{align}
    d &= \frac{1/m \sum^m_{i=1} c_i - 1/m \sum^{2m}_{i=m+1} c_i}{\hat{\sigma}} \\
      &= \frac{\sum^m_{i=1} c_i - \sum^{2m}_{i=m+1} c_i + \sum^m_{i=1} c_i - \sum^m_{i=1} c_i}{m\hat{\sigma}} \nonumber \\
      &= \frac{2\sum^m_{i=1} c_i - 2m\hat{\mu}}{m\hat{\sigma}} \nonumber \\
      &= \frac{2}{m} \sum^m_{i=1}\frac{c_i - \hat{\mu}}{\hat{\sigma}} \in[-2,2]
\end{align}
where the last statement follows from Lemma \ref{theor:schrankeAllg} (see Appendix) with $\sum^m_{i=1}\frac{c_i - \hat{\mu}}{\hat{\sigma}} \in [-m,m]$. The extreme value $\pm 2$ is reached if $c_1=\ldots =c_m=-c_{m+1}=\ldots =-c_{2m}$, which can be obtained by setting $\mathbf{x}_1=\ldots =\mathbf{x}_m=-\mathbf{y}_1=\ldots =-\mathbf{y}_m$, independently of $A$ and $B$ as long as $A \neq B$ and $\sum_{\mathbf{a}_i \in A} \mathbf{a}_i/\|\mathbf{a}_i\| \neq 0 \neq \sum_{\mathbf{b}_i \in B} \mathbf{b}_i/\|\mathbf{b}_i\|$.
\end{proof}

The proof of Theorem \ref{theor:weat_extrema} shows that the effect size reaches its extreme values only if all $x \in X$ achieve the same similarity score $s(\mathbf{x},A,B)$ and $s(\mathbf{y},A,B) = -s(\mathbf{x},A,B) \; \forall \; y \in Y$, i.e. the smaller the variance of $s(\mathbf{x},A,B)$ and $s(\mathbf{y},A,B)$ the higher the effect size. This implies that we can influence the effect size by changing the variance of $s(\mathbf{t},A,B)$ without changing whether the groups are separable in the embedding space. Furthermore, the proof of Theorem \ref{theor:weat_d_min} shows that WEAT can report no bias even if the embeddings contain associations with the bias attributes. This problem occurs, because WEAT is only sensitive to the stereotype "$X$ is associated with $A$ and $Y$ is associated with $B$" and will overlook biases diverting from this hypothesis.

\subsection{Analysis of the Direct Bias}
\label{sec:analysis_db}

For the Direct Bias, the following theorems show that it is \extremaVgl, but not \minVgl.
The proof for Theorem \ref{theor:direct_bias_minvgl} shows that the first principal component used by the Direct Bias does not necessarily represent individual bias directions appropriately. This can lead to both over- and underestimation of bias by the Direct Bias.

\begin{theorem}
\label{theor:direct_bias_extrema}
The \db\ is \extremaVgl\ for $c \geq 0$.
\end{theorem}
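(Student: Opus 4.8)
The plan is to compute the range of the Direct Bias directly and observe that its extreme values are the constants $0$ and $1$, independent of the bias direction $\mathbf{g}$ (and hence of the attribute sets that define $\mathbf{g}$). First I would note that for any target $\mathbf{w}$ and any nonzero $\mathbf{g}$ we have $\cos(\mathbf{w},\mathbf{g}) \in [-1,1]$, so $|\cos(\mathbf{w},\mathbf{g})| \in [0,1]$. For $c \geq 0$ the map $t \mapsto t^c$ sends $[0,1]$ into $[0,1]$, so each summand $|\cos(\mathbf{w},\mathbf{g})|^c$ lies in $[0,1]$. Since $DirectBias(W)$ is the arithmetic mean of $|W|$ such summands, it follows immediately that $0 \leq DirectBias(W) \leq 1$ for every target set $W$ of the fixed size and every $\mathbf{g}$.

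Next I would argue that both bounds are attainable by a suitable choice of targets, and crucially that this choice can be made for an arbitrary bias direction $\mathbf{g}$. For the maximum, choosing each $\mathbf{w}$ parallel to $\mathbf{g}$ gives $|\cos(\mathbf{w},\mathbf{g})| = 1$, hence $DirectBias(W) = 1$; this is possible for any $\mathbf{g}$. For the minimum (with $c > 0$), choosing each $\mathbf{w}$ orthogonal to $\mathbf{g}$ gives $|\cos(\mathbf{w},\mathbf{g})|^c = 0$, hence $DirectBias(W) = 0$. The only thing to verify is that such an orthogonal target exists at all: since the embedding space has dimension $d \geq 2$, the orthogonal complement $\mathbf{g}^\perp$ is a subspace of dimension $d-1 \geq 1$ and thus contains nonzero vectors, so such $\mathbf{w}$ can always be found regardless of what $\mathbf{g}$ is. For the boundary case $c = 0$ the summand is identically $1$, whence $DirectBias(W) = 1$ for every $W$ and $\mathbf{g}$; here $b_{min} = b_{max} = 1$, still a constant independent of $A$.

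Putting these together, the extreme values are $b_{max} = 1$ and $b_{min} = 0$ (or $b_{min} = b_{max} = 1$ when $c = 0$), and in every case they depend only on $c$, not on the attribute sets $A$ that determine $\mathbf{g}$. This is exactly the condition in Definition \ref{def:max_amplitutde}, so the Direct Bias is \extremaVgl. I expect the only delicate point to be the uniform attainability of the extremes over all $\mathbf{g}$ — that is, recognizing that reaching $0$ and $1$ hinges solely on the geometric relation between the targets and a single direction, which the ambient dimension $d \geq 2$ always permits; the edge case $c = 0$ is minor bookkeeping rather than a genuine obstacle.
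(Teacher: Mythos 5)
Your proposal is correct and takes essentially the same approach as the paper: the paper's proof likewise observes that for $c \geq 0$ each individual bias $|\cos(\mathbf{t},\mathbf{g})|^c$ lies in $[0,1]$ and that averaging over the target set preserves this bound. You go further by explicitly verifying that the extremes $0$ and $1$ are actually attainable for any bias direction $\mathbf{g}$ (parallel resp.\ orthogonal targets, using $d \geq 2$) and by handling the edge case $c=0$, details the paper leaves implicit but which are needed to conclude that $\max$ and $\min$ in Definition \ref{def:max_amplitutde} equal the same constants for all $A$.
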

\begin{proof}
\label{proof:direct_bias_extrema}
For $c \geq 0$ the individual bias $|\cos(\mathbf{t},\mathbf{g})|^c$ is in $[0,1]$. Calculating the mean over all targets in $T$ does not change this bound. The statement follows.
\end{proof}

\begin{theorem}
\label{theor:direct_bias_minvgl}
The \db\ is not \minVgl.
\end{theorem}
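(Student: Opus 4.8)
The plan is to refute the biconditional in Definition \ref{def:min_max_bias} by an explicit counterexample. Since $b_0 = 0$ for the \db\ and, for $c > 0$, $|\cos(\mathbf{t},\mathbf{g})|^c = 0$ holds exactly when $\mathbf{t} \perp \mathbf{g}$, the score certifies ``no bias'' precisely on the hyperplane orthogonal to the single direction $\mathbf{g}$. By contrast, for the two-group case the neutrality condition $s(\mathbf{t},A_1) = s(\mathbf{t},A_2)$ is, after scaling by $\|\mathbf{t}\|$, equivalent to $\mathbf{t} \perp \mathbf{v}$ with $\mathbf{v} = \sum_{\mathbf{a}\in A_1}\mathbf{a}/\|\mathbf{a}\| - \sum_{\mathbf{b}\in A_2}\mathbf{b}/\|\mathbf{b}\|$ the difference of the normalized group means. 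I would show that these two hyperplanes need not coincide: $\mathbf{g}$ is the \emph{first principal component} of the individual defining directions, whereas $\mathbf{v}$ is essentially their \emph{sum}, so when the defining directions are not aligned, PCA selects the direction of largest variance rather than the mean direction.

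Concretely, I would work in $\mathbb{R}^2$ with two counterpart pairs forming defining sets $D_1 = \{\mathbf{a}_1,\mathbf{b}_1\}$, $D_2 = \{\mathbf{a}_2,\mathbf{b}_2\}$ and groups $A_1 = \{\mathbf{a}_1,\mathbf{a}_2\}$, $A_2 = \{\mathbf{b}_1,\mathbf{b}_2\}$. Choosing unit vectors so that the differences $\mathbf{d}_1 = \mathbf{a}_1 - \mathbf{b}_1$ and $\mathbf{d}_2 = \mathbf{a}_2 - \mathbf{b}_2$ are orthogonal and of unequal length --- say $\mathbf{d}_1$ along the $y$-axis and the longer $\mathbf{d}_2$ along the $x$-axis --- the centered defining samples $\pm\mathbf{d}_1/2,\ \pm\mathbf{d}_2/2$ have a diagonal covariance with the larger eigenvalue along the $x$-axis. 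Hence the first principal component is $\mathbf{g} = (1,0)$, while the neutral direction $\mathbf{v} = \mathbf{d}_1 + \mathbf{d}_2$ points along the diagonal.

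With this configuration I would take the target $\mathbf{t} = (0,1)$. Then $\cos(\mathbf{t},\mathbf{g}) = 0$, so $DirectBias(\{\mathbf{t}\}) = 0 = b_0$, yet a direct computation gives $s(\mathbf{t},A_1) = 0 \neq s(\mathbf{t},A_2)$, since $\mathbf{t}$ is not orthogonal to $\mathbf{v}$. This breaks the ``$\Rightarrow$'' direction of the biconditional: the \db\ declares $\mathbf{t}$ unbiased although it is biased in the sense of Definition \ref{def:word_bias}. Symmetrically, a target in $\mathbf{v}^{\perp}$ is neutral yet receives a nonzero score, breaking the ``$\Leftarrow$'' direction --- this is exactly the over-/under-estimation behaviour noted earlier. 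Either violation proves the \db\ is not \minVgl.

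The step I expect to be the crux is the principal-component computation: I must verify that the first PC of the defining directions genuinely points along $\mathbf{g} = (1,0)$ and is distinct from the group-mean difference $\mathbf{v}$, and that this distinctness survives the normalization applied inside $s(\mathbf{t},A_i)$. Keeping all attribute vectors unit-normalized makes the covariance matrix diagonal and its eigenvectors transparent, so the obstacle reduces to choosing the two defining-direction lengths unequal, which avoids the degenerate isotropic case where the first principal component is ambiguous.
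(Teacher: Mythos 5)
Your proposal is correct and follows essentially the same route as the paper's proof: an explicit two-dimensional counterexample in which the first principal component over the centered defining sets diverges from the group-mean-difference direction that determines neutrality via $s(\cdot,A_i)$, producing a target that is biased per Definition \ref{def:word_bias} yet scored $b_0 = 0$ and, symmetrically, a neutral target with nonzero score. (The paper realizes the mismatch with two equal-length, non-orthogonal pair differences symmetric about an axis rather than your orthogonal unequal-length ones, and your incidental claim $s(\mathbf{t},A_1)=0$ need not hold as stated---but only the inequality $s(\mathbf{t},A_1)\neq s(\mathbf{t},A_2)$ is required, and that does hold in your construction.)
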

\begin{proof}
\label{proof:db_min}
For the Direct Bias $b_0 = 0$ indicates no bias.
Consider a setup with two attribute sets $A_=\{\mathbf{a_1}, \mathbf{a_2}\}$ and $C=\{\mathbf{c_1}, \mathbf{c_2}\}$.\\
Using the notation from Section \ref{sec:Bolukbasi-directbias} this gives us two defining sets $D_1 = \{\mathbf{a_1}, \mathbf{c_1}\}$ $D_2 = \{\mathbf{a_2}, \mathbf{c_2}\}$. Let $\mathbf{a_{1}} = (-x, rx)^T = -\mathbf{c_{1}}, \mathbf{a_{2}}  = (-x, -rx)^T = -\mathbf{c_{2}}$ and $r > 1$.\\
The bias direction is obtained by computing the first principal component over all $(\mathbf{a_{i}} - \mathbf \mu_i)$ and $(\mathbf{c_{i}} - \mathbf \mu_i)$ with $\mathbf \mu_i = \frac{\mathbf{a_i}+\mathbf{c_i}}{2} = 0$. Due to $r > 1$, $\mathbf{b} = (0, 1)^T$ is a valid solution for the 1st principal component as it maximizes the variance
\begin{align}
    \mathbf{b} = argmax_{\|\mathbf{v}\| = 1} \sum_i (\mathbf{v} \cdot \mathbf{a_{i}})^2 + (\mathbf{v} \cdot \mathbf{c_{i}})^2.
\end{align}

According to the definition in Section \ref{sec:formal_bias_def}, any word $\mathbf{t} = (0, w_y)^T$ would be considered neutral to groups $A$ and $C$ with $s(\mathbf{t}, A) = s(\mathbf{t}, C)$ and being equidistant to each word pair $\{a_i, c_i\}$.\\
But with the bias direction $\mathbf{b} = (0, 1)^T$ the Direct Bias would report $b_{max} = 1$ instead of $b_0 = 0$, which contradicts Definition \ref{def:min_max_bias}.\\
On the other hand, we would consider a word  $\mathbf{t} = (w_x, 0)^T$ maximally biased, but the Direct Bias would report no bias.
Showing that the bias reported for single words $\mathbf{t}$ is not \minVgl, proves that the \db\ is not \minVgl.
\end{proof}

\section{Experiments}
\label{sec:experiments}
In the experiments, we show that the limitations of WEAT and Direct Bias shown in Section \ref{sec:score_analysis} do occur with state-of-the-art language models. We show that the effect size of WEAT can be misleading when comparing bias in different settings. Furthermore, we highlight how attribute embeddings differ between different models, which impacts WEAT's individual bias, and that the Direct Bias can obtain a misleading bias direction by using the Principal Component Analysis (PCA). We use different pretrained language models from Huggingface \cite{huggingface} and the PCA implementation from Scikit-learn \cite{scikit} and observe gender bias based on 25 attributes per gender, such as $(\mathbf{man}, \mathbf{woman})$.

\begin{figure*}[b]
\centering
\includegraphics[width=0.45\linewidth]{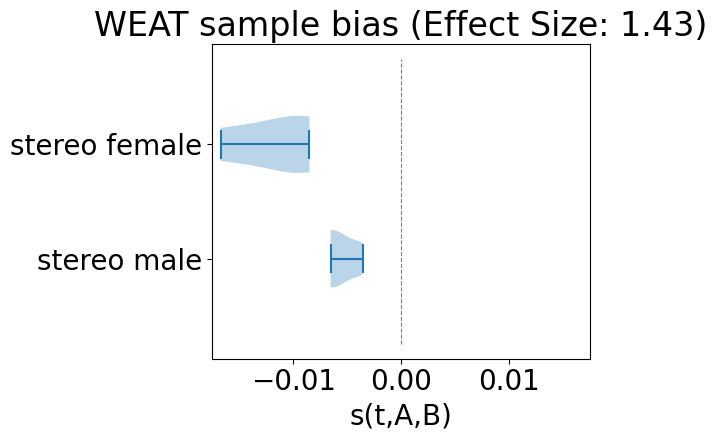}
\includegraphics[width=0.45\linewidth]{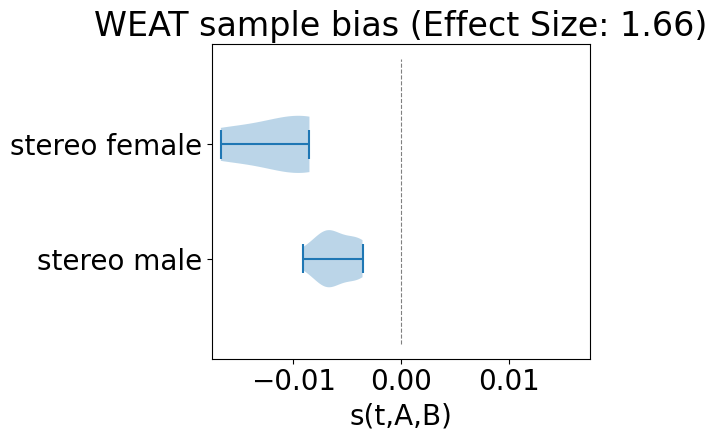}
\caption{WEAT individual bias and effect size for distilBERT with different selections of target words. When selecting a smaller number of job titles (left), we observe that stereotypical male/female jobs are more distinct w.r.t. $s(\mathbf{t},A,B)$, while the effect size is lower.}
\label{fig:eff_size}
\end{figure*}

\subsection{WEAT's effect size}

\begin{figure}[t]
\centering
\includegraphics[width=0.5\linewidth]{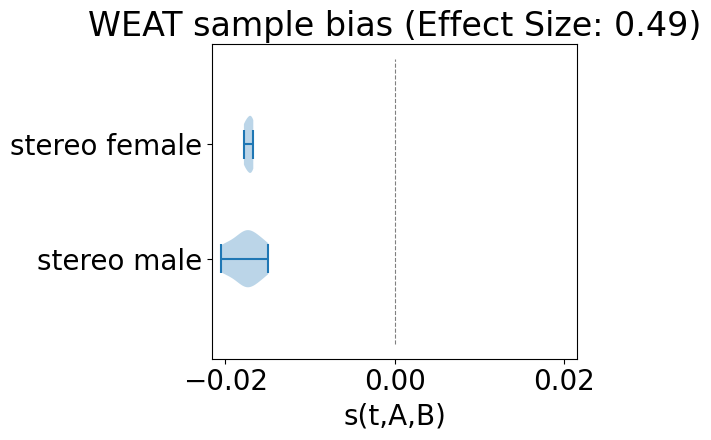}
\caption{WEAT's individual bias for job titles in GPT.}
\label{fig:gpt_weat}
\end{figure}

In a first experiment, we demonstrate that the effect size does not quantify social bias in terms of the separability of stereotypical targets. We use embeddings of \textit{distilbert-base-uncased} and \textit{openai-gpt} to compute gender bias according to $s(\mathbf{t},A,B)$ and the effect size $d(X,Y,A,B)$ for stereotypically male/female job titles. Figure \ref{fig:eff_size} shows the distribution of $s(\mathbf{t},A,B)$ for DistilBERT, where stereotypical male/female targets are clearly distinct based on the sample bias. Figure \ref{fig:gpt_weat} shows the distribution of $s(\mathbf{t},A,B)$ for GPT, where stereotypical male and female terms are similarly distributed. First, we focus on the DistilBERT model (Figure \ref{fig:eff_size}), which clearly is biased with regard to the tested words. We compare two cases with different targets, such that the stereotypical target groups are better separable in one case (left plot), which one may describe as more severe or more obvious bias compared to the second case, where the target groups are almost separable (right plot). However, the effect size behaves contrary to this. \\
Despite this, when comparing Figures \ref{fig:eff_size} and \ref{fig:gpt_weat} , one can assume that large differences in effect sizes still reveal significant differences in social bias. While high effect sizes are reported in cases where both stereotypical groups are (almost) separable, we report low effect sizes when groups achieve similar individual biases. Furthermore, one should always report the p-value jointly with the effect size to get an impression on its significance. In other terms, WEAT is useful for qualitative bias analysis (or confirming biases), but not quantitatively.

\subsection{WEAT's individual bias}
\begin{table}[b]
\caption{Mean attribute difference $||\mathbf{\hat{a}} - \mathbf{\hat{b}}||$ for different language models given $25$ attribute pairs for gender.}
\label{tab:attr_diff}
\centering
\begin{tabular}{cc}
\toprule
Model Name & Mean Attribute Diff \\
\midrule
openai-gpt & 0.728 \\
gpt2 & 0.842 \\
bert-large-uncased & 1.123 \\
bert-base-uncased & 0.568 \\
distilbert-base-uncased  & 0.433 \\
roberta-base & 0.235 \\
distilroberta-base & 0.198 \\
electra-base-generator & 0.518 \\
albert-base-v2 & 1.123 \\
xlnet-base-cased & 5.206 \\
\bottomrule
\end{tabular}
\end{table}

In Theorem \ref{theor:weat_s_extrema} we discussed that WEAT's individual bias depends on the mean difference of attributes $||\mathbf{\hat{a}} - \mathbf{\hat{b}}||$. As shown in Table \ref{tab:attr_diff} these vary a lot between different language models. We report the two most extreme values of $0.198$ for \textit{distilroberta-base} and $5.206$ for \textit{xlnet-base-uncased}. With such differences we cannot compare sample biases based on their magnitude between different models.

\subsection{Direct Bias}
\begin{figure*}[tb]
\centering
\includegraphics[width=0.4\linewidth]{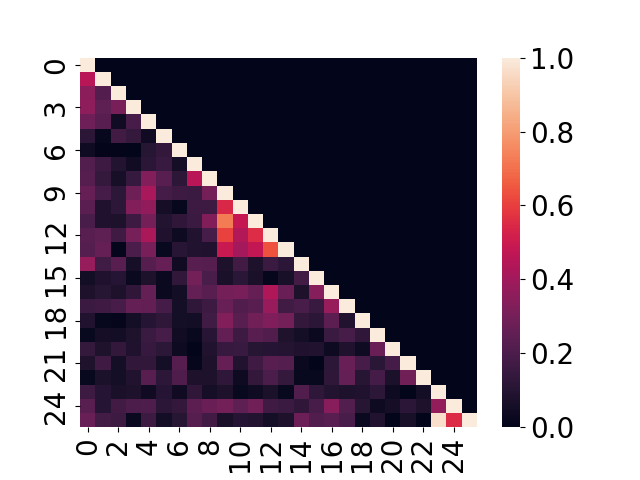}
\includegraphics[width=0.4\linewidth]{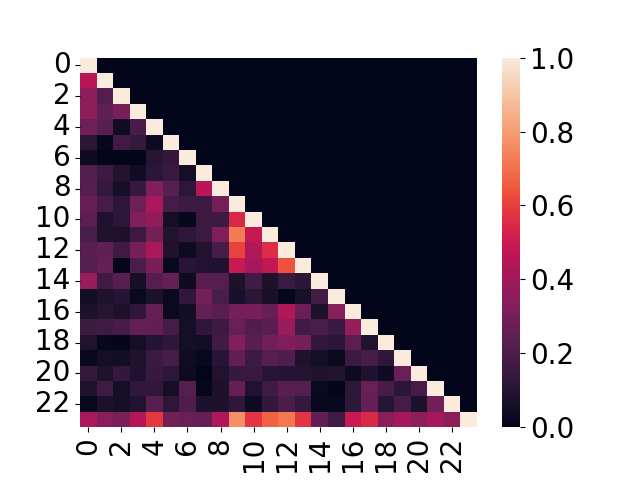}
\caption{Correlation bias directions of individual word pairs (left: 0-24, right: 0-22) and the first principal component (last row) as selected for the Direct Bias. The lowest row in the heatmap shows the correlation of individual bias directions with the first principal component.}
\label{fig:bias_dir}
\end{figure*}

Figure \ref{fig:bias_dir} shows the correlation of different bias directions on \textit{bert-base-uncased} embeddings. We report bias directions of individual word pairs such as $(\mathbf{man}, \mathbf{woman})$ (left plot: 0-24, right plot: 0-22) and the resulting bias direction as obtained by PCA (last row). Overall we report very low correlations between the individual bias directions. The first principal component reflects mostly individual bias direction 23 and 24 (left plot), which differ a lot from all other bias directions. On contrary, if we excluded word pairs 23 and 24 from the PCA (right plot), the first principal component would give a better estimate of bias directions 0-22. This shows that only one or few "outlier" pairs are sufficient to make the Direct Bias measure "bias" in a completely different way. From a practical point of view, by analyzing the correlation between individual bias directions we can get a good estimate whether the first principal component is a good estimate. Moreover, if we observe only weak correlations between bias directions from the selected word pairs, that is an indication that a 1-dimensional bias direction may not be sufficient to capture the relationship of sensitive groups with regard to which we want to measure bias. While Bolukbasi et al. \cite{bolukbasi} did not explicitly define that case for the Direct Bias, they proposed to use a bias subspace, defined by the $k$ first principal components, for their Debiasing algorithm, which is related to the Direct Bias. Accordingly, this could be applied to the Direct Bias. Apart from that, one should verify how well the bias direction or bias subspace obtained by PCA represents individual bias directions to make sure that they're actually measuring bias in the assumed way.

\section{Conclusion}
\label{sec:conclusion}
In this work, we introduce formal properties for cosine based bias scores, concerning their meaningfulness for quantification of social bias. We show that WEAT and the Direct Bias have  theoretical flaws that limits their ability to quantify bias. Furthermore, we show that these issues have a real impact when applying these bias scores on state-of-the-art language models. These findings should be considered in the experimental design when evaluating social bias with one of these measures.
Future works could build on the proposed properties to analyze other scores from the literature or propose a score that is better suited for bias quantification.
The findings of our theoretical analysis open the question, whether the limitations of cosine based scores reported in the literature are due to the theoretical flaws of distinct scores, which are highlighted by our analysis, rather than limitations of geometrical properties as a sign of bias. This is an important question that should be addressed in future work.
In general, we encourage other researchers to take an effort to bring the various bias measures from the literature into context and to highlight their properties and limitations, which is critical to derive best practices for bias detection and quantification.

\section*{Acknowledgments}
Funded by the Ministry of Culture and Science of North-Rhine-Westphalia in the frame of the project SAIL, NW21-059A.

\bibliographystyle{unsrt}  
\bibliography{references}

\newpage
\section*{\uppercase{Appendix}}
In order to show that the effect size of WEAT is \extremaVgl \;(see Theorem \ref{theor:weat_extrema}), we need the following lemma.

\begin{lemma}\label{theor:schrankeAllg}
Let $x_1,...,x_n \in \mathbb{R}$ be real numbers. Let $\hat{\mu},\hat{\sigma}$ denote the empirical estimate of mean and standard deviation of the $x_i$. Then, for any selection of indices $i_1,...,i_m$, with $i_j \neq i_k$ for $j \neq k$, the following bound holds
\begin{align*}
    \left| \sum_{j = 1}^m \frac{x_{i_j} - \hat{\mu}}{\hat{\sigma}} \right| \leq \sqrt{m \cdot (n-m)}.
\end{align*}
Furthermore, for $0 < m < n$ the bound is obtained if and only if all selected resp. non-selected $x_i$ have the same value, i.e. $x_{i_j} = \hat{\mu} + s\sqrt{\frac{n-m}{m}}\hat{\sigma}$ and all other $x_k = \hat{\mu} -s\sqrt{\frac{m}{n-m}}\hat{\sigma}$ with $s \in \{-1,1\}$.
\end{lemma}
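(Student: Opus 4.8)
The plan is to reduce the problem to the standardized variables $z_i = (x_{i} - \hat\mu)/\hat\sigma$, for which the claim becomes a clean consequence of Cauchy--Schwarz. First I would record that the $z_i$ satisfy $\sum_{i=1}^n z_i = 0$ and $\sum_{i=1}^n z_i^2 = n$, the latter because $\hat\sigma^2 = \tfrac1n\sum_i (x_i-\hat\mu)^2$ (matching the convention used in Theorem~\ref{theor:weat_extrema}); here I implicitly assume $\hat\sigma > 0$, i.e.\ the $x_i$ are not all equal, so that the quantity is well defined. Writing $S = \{i_1,\dots,i_m\}$ for the selected index set and $P = \sum_{i\in S} z_i$, the mean-zero identity immediately gives $\sum_{i\notin S} z_i = -P$, so the selected and non-selected sums have equal magnitude.

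Next I would apply Cauchy--Schwarz separately on the two blocks. Setting $U = \sum_{i\in S} z_i^2$ and $V = \sum_{i\notin S} z_i^2$, we obtain $P^2 \le mU$ and $P^2 \le (n-m)V$, while the variance identity gives $U + V = n$. Eliminating $U$ via the first inequality (so $U \ge P^2/m$, hence $V = n-U \le n - P^2/m$) and substituting into the second yields $P^2 \le (n-m)\,(n - P^2/m)$, which rearranges to $P^2 \le m(n-m)$. Taking square roots proves the bound $\bigl|\sum_{j} z_{i_j}\bigr| \le \sqrt{m(n-m)}$.

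For the equality characterisation I would track when the two applications of Cauchy--Schwarz are tight. Equality in the block over $S$ forces all selected $z_i$ to share a common value $\alpha$, and equality over the complement forces all non-selected $z_i$ to share a value $\beta$; conversely, if either block is non-constant the corresponding inequality is strict and the final bound cannot be attained. Plugging this two-level ansatz into the mean-zero constraint gives $m\alpha + (n-m)\beta = 0$, and into the variance constraint gives $m\alpha^2 + (n-m)\beta^2 = n$. Solving this $2\times 2$ system (using $0 < m < n$ to avoid division by zero) yields $\alpha = s\sqrt{(n-m)/m}$ and $\beta = -s\sqrt{m/(n-m)}$ for a sign $s\in\{-1,1\}$; undoing the standardisation via $x_i = \hat\mu + \hat\sigma z_i$ recovers exactly the values stated in the lemma.

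The routine combination of the two Cauchy--Schwarz inequalities is straightforward; the step requiring the most care is the equality analysis, where I must argue that tightness of the final bound forces tightness of \emph{both} Cauchy--Schwarz steps simultaneously (not merely one of them), and then confirm that the resulting constant-on-each-block configuration is consistent with both moment constraints rather than over-determined.
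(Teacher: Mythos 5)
Your proof is correct, but it takes a genuinely different and more elementary route than the paper. The paper's proof also begins by reducing to standardized variables (via an affine-invariance argument), but then recasts the problem as a constrained quadratic program --- minimize $\mathbf{s}^\top\mathbf{x}$ subject to $\mathbf{x}^\top\mathbf{x}=n$ and $\mathbf{1}^\top\mathbf{x}=0$ --- verifies that the constraint qualification holds at every feasible point, solves the KKT system to show the only critical points are the two-block constant vectors, and finally checks positive definiteness of the Hessian of the Lagrangian to confirm global optimality. Your double Cauchy--Schwarz argument replaces all of this machinery: the block inequalities $P^2 \le mU$ and $P^2 \le (n-m)V$ combined with $U+V=n$ give the bound in a few lines, and the equality characterization falls out of the equality case of Cauchy--Schwarz. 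Your concern about the step needing the most care is well placed but resolves cleanly: when $P^2=m(n-m)$, the two ends of the chain $P^2 \le (n-m)V \le (n-m)\left(n-P^2/m\right)$ coincide, so both intermediate inequalities are forced to be equalities simultaneously, which makes each block constant; the resulting $2\times 2$ system in $\alpha,\beta$ is then consistent and yields exactly the stated extremizers. What the paper's approach buys is a mechanical template (Lagrange/KKT) that would also locate optima in variants of the problem where no clever inequality presents itself; what yours buys is brevity, no need for constraint-qualification or second-order checks, and a transparent structural reason why the extremizers are constant on each block. Both arguments share the implicit hypothesis $\hat{\sigma}>0$ (which you state explicitly, and the paper leaves tacit), and both dispose of the degenerate cases $m=0$ and $m=n$ trivially.
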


\begin{proof}
\label{proof:schrankeAllg}
For cases $m = 0$ or $m = n$ the statement is trivial. So assume $0 < m < n$.
\newcommand{\x}{\mathbf{x}}
\newcommand{\s}{\mathbf{s}}
Let $f(x) = ax+b$ be an affine function. Then the images of $x_i$ under $f$ have mean $a\hat{\mu}+b$ and standard deviation $|a|\hat{\sigma}$. On the other hand, we have
\begin{align*}
    \frac{f(x_i)-(a\hat{\mu}+b)}{|a|\hat{\sigma}}
    &= \frac{(ax_i+b)-(a\hat{\mu}+b)}{|a|\hat{\sigma}}
    = \text{sgn}(a)\frac{x_i-\hat{\mu}}{\hat{\sigma}}.
\end{align*}
Thus, applying $f$ does not change the bound and therefore we may reduce to case of $\hat{\mu} = 0$ and $\hat{\sigma} = 1$. This allows us to rephrase the problem of finding the maximal bound as an quadratic optimization problem:
\begin{align*}
    \min \quad& \s^\top \x \\
    \text{s.t.} \quad & \x^\top \x = n\\
                      & \mathbf{1}^\top \x = 0,
\end{align*}
where $\s = (1,...,1,0,...,0)^\top$, $\x = (x_1,...,x_n)^\top$ and $\mathbf{1}$ denotes the vector consisting of ones only. Notice, that we assumed w.l.o.g. that $i_1,...,i_m = 1,...,m$. Furthermore, we made use of the symmetry properties to replace $\max |\s^\top \x|$ by the minimizing statement above, $\hat{\mu} = 0$ is expressed by the last and $\hat{\sigma} = 1$ by the first constrained (recall that $\hat{\sigma} =  \sqrt{1/n \x^\top \x-\hat{\mu}^2}$).
Notice, that $\nabla_x \x^\top\x-n = 2\x$ and $\nabla_x \mathbf{1}^\top \x = \mathbf{1}$ are linear dependent if and only if $\x = a \mathbf{1}$ for some $a \in \mathbb{R}$, thus, as $0 = a \mathbf{1}^\top \mathbf{1} = an$ if and only if $a = 0$ and $(0\mathbf{1})^\top (0\mathbf{1}) = 0$, there is no feasible $\x$ for which the KKT-conditions do not hold and we may therefore use them to determine all the optimal points.

The Lagrangien of the problem above and its first two derivatives are given by
\begin{align*}
    L(\x,\lambda_1,\lambda_2) &= \s^\top \x - \lambda_1 (\x^\top I \x - n) - \lambda_2 \mathbf{1}^\top \x \\
    \nabla_x L(\x,\lambda_1,\lambda_2) &= \s - 2 \lambda_1 \x - \lambda_2 \mathbf{1} \\
    \nabla^2_{x,x} L(\x,\lambda_1,\lambda_2) &= - 2 \lambda_1 I.
\end{align*}
We can write $\nabla_x L(\x,\lambda_1,\lambda_2) = 0$ as the following linear equation system:
\begin{align*}
    \begin{bmatrix} 2x_1 & 1 \\ 2x_2 & 1 \\ \vdots & \vdots \\ 2x_n & 1 \end{bmatrix} \begin{bmatrix} \lambda_1 \\ \lambda_2 \end{bmatrix} &= \underbrace{\begin{bmatrix} 1 \\ 1 \\ \vdots \\ 0 \end{bmatrix}}_{=\s}.
\end{align*}
Subtracting the first row from row $2,...,m$ and row $m+1$ from row $m+2,...,n$ we see that $2(x_k-x_1)\lambda_1 = 0$ for $k=1,...,m$ and $2(x_k-x_{m+1})\lambda_1 = 0$ for $k = m+2,...,n$, which either implies $\lambda_1 = 0$ or $x_1=x_2 = ... = x_m$ and $x_{m+1}=x_{m+2} = ... = x_n$. However, assuming $\lambda_1 = 0$ would imply that $\lambda_2 = 1$ from the first row and $\lambda_2 = 0$ from the $m+1$th row, which is a contradiction. Thus, we have  $x_1=x_2 = ... = x_m$ and $x_{m+1}=x_{m+2} = ... = x_n$. But the second constraint from the optimization problem can then only be fulfilled if $mx_1+(n-m)x_{m+1} = 0$ and this implies $x_{m+1} = -\frac{m}{n-m}x_1$. In this case the first constraint is equal to $n = m x_1^2 + (n-m) \left(\frac{m}{n-m} x_1\right)^2$
, which has the solution $x_1 = \pm\sqrt{\frac{n-m}{m}}$.

Set $\x^* = (-\sqrt{\frac{n-m}{m}},...,-\sqrt{\frac{n-m}{m}},\sqrt{\frac{m}{n-m}},...,\sqrt{\frac{m}{n-m}})$. Then $\x^*$ and $-\x^*$ are the only possible KKT points as we have just seen. Plugging $\x^*$ into the equation system above and solving for $\lambda_{1/2}$ we obtain
\begin{align*}
    \lambda_1^* &= -\frac{1}{2 \left(\sqrt{\frac{m}{n - m}} + \sqrt{\frac{n - m}{m}}\right)}, \; \; \;
    \lambda_2^* = \frac{\sqrt{\frac{m}{n - m}}}{\sqrt{\frac{m}{n - m}} + \sqrt{\frac{n-m}{m}}}
\end{align*}
Now, as $\nabla^2_{x,x} L(\x^*,\lambda_1^*,\lambda_2^*) = \left(\sqrt{\frac{m}{n - m}} + \sqrt{\frac{n - m}{m}}\right)^{-1} I$ is positive definite, we see that $\x^*$ is a global optimum, indeed. The statement follows.
\end{proof}

\end{document}